\begin{document}
\large

\title{Augmented Quaternion and Augmented Unit Quaternion Optimization}
\author{ Liqun Qi\footnote{Department of Mathematics, School of Science, Hangzhou Dianzi University, Hangzhou 310018 China; Department of Applied Mathematics, The Hong Kong Polytechnic University, Hung Hom, Kowloon, Hong Kong
({\tt maqilq@polyu.edu.hk}).}
    \and \
    Xiangke Wang\thanks{College of Mechatronics and Automation, National University of Defence Technology, Changsha, 410073, China ({\tt
xkwang@nudt.edu.cn}).}
    \and \
     Chunfeng Cui\footnote{LMIB of the Ministry of Education, School of Mathematical Sciences, Beihang University, Beijing 100191 China.
    ({\tt chungfengcui@buaa.edu.cn}).}
}
\date{\today}
\maketitle

\begin{abstract}
In this paper, we introduce and explore augmented quaternions and augmented unit quaternions, and present an augmented unit quaternion optimization model.  An augmented quaternion consist of a quaternion and a translation vector.  The multiplication rule of augmented quaternion is defined.  An augmented unit quaternion consists of a unit quaternion and a translation vector.  The augmented unit quaternions form a Lie group.  By means of augmented unit quaternions, we study the error model and kinematics.   Then we formulate two classical problems in robot research, i.e., the hand-eye calibration problem and the simultaneous localization and mapping (SLAM) problem as augmented unit quaternion  optimization problems, which are actually real smooth spherical equality constrained optimization problems.  Comparing with the corresponding unit dual quaternion optimization model, the augmented unit quaternion optimization model has less variables and removes the orthogonality constraints.

\medskip


  \textbf{Key words.} Augmented quaternion, augmented unit quaternion, augmented unit quaternion optimization, hand-eye calibration, simultaneous localization and mapping.

\end{abstract}

\renewcommand{\Re}{\mathds{R}}
\newcommand{\rank}{\mathrm{rank}}
\renewcommand{\span}{\mathrm{span}}
\newcommand{\X}{\mathcal{X}}
\newcommand{\A}{\mathcal{A}}
\newcommand{\I}{\mathcal{I}}
\newcommand{\B}{\mathcal{B}}
\newcommand{\C}{\mathcal{C}}
\newcommand{\OO}{\mathcal{O}}
\newcommand{\e}{\mathbf{e}}
\newcommand{\0}{\mathbf{0}}
\newcommand{\dd}{\mathbf{d}}
\newcommand{\ii}{\mathbf{i}}
\newcommand{\jj}{\mathbf{j}}
\newcommand{\kk}{\mathbf{k}}
\newcommand{\va}{\mathbf{a}}
\newcommand{\vb}{\mathbf{b}}
\newcommand{\vc}{\mathbf{c}}
\newcommand{\vq}{\mathbf{q}}
\newcommand{\vg}{\mathbf{g}}
\newcommand{\pr}{\vec{r}}
\newcommand{\ps}{\vec{s}}
\newcommand{\pt}{\vec{t}}
\newcommand{\pu}{\vec{u}}
\newcommand{\pv}{\vec{v}}
\newcommand{\pw}{\vec{w}}
\newcommand{\pp}{\vec{p}}
\newcommand{\pq}{\vec{q}}
\newcommand{\pl}{\vec{l}}
\newcommand{\vt}{\rm{vec}}
\newcommand{\vx}{\mathbf{x}}
\newcommand{\vy}{\mathbf{y}}
\newcommand{\vu}{\mathbf{u}}
\newcommand{\vv}{\mathbf{v}}
\newcommand{\y}{\mathbf{y}}
\newcommand{\vz}{\mathbf{z}}
\newcommand{\T}{\top}

\newtheorem{Thm}{Theorem}[section]
\newtheorem{Def}[Thm]{Definition}
\newtheorem{Ass}[Thm]{Assumption}
\newtheorem{Lem}[Thm]{Lemma}
\newtheorem{Prop}[Thm]{Proposition}
\newtheorem{Cor}[Thm]{Corollary}
\newtheorem{example}[Thm]{Example}
\newtheorem{remark}[Thm]{Remark}

\section{Introduction}

In \cite{Qi22}, two classical problems in robot research, i.e., the hand-eye calibration problem \cite{Da99, LWW10, LLDL18, SA89, ZRS94} and
the simultaneous localization and mapping (SLAM) problem \cite{BS07, BLH19, CCCLSNRL16, CTDD15, WJZ13, WND00} were formulated as unit dual quaternion optimization problems.   In this optimization model, the variables are unit dual quaternions.   To convert to real optimization problems, each dual quaternion variable is equivalent to an eight-dimensional real vector, with one spherical equality constraint and one orthogonality constraint \cite{CLQY22}.   Both the spherical and orthogonality constraints are quadratic.
The optimization problem is smooth.   Then in \cite{Qi22a}, beside discussing some unit dual quaternion optimization models, a motion optimization model was presented.  In that model, each unit dual quaternion variable was replaced by a motion variable, which is equivalent to a six-dimensional real vector.  The spherical and orthogonality constraints were removed.  The equality constrained optimization problem turns to an unconstrained optimization problem.   This reduces the size of the problem.  However, further study shows that the motion operations are nonsmooth with some discontinuity points. See Appendix. This causes troubles.  Unless this discontinuity issue is overcome, this approach cannot work well.

This motivates us to consider a midway approach.   We propose to use an augmented unit quaternion variable to substitute the unit dual quaternion variable and the motion variable.  An augmented unit quaternion consists of a unit quaternion and a translation vector.  Thus, it is equivalent to a seven-dimensional real vector.
The spherical constraint is kept, and the orthogonality constraint is removed.   Our study reveals that the augmented unit quaternion operation is smooth.  Thus, the new approach keeps the smoothness and reduces the size of the problem simultaneously.  This new approach is attractive.

The distribution of the remainder of this paper is as follows.  In the next section, we review some basic properties of quaternions and dual quaternions.   Then we introduce augmented quaternions and augmented dual quaternions in Section 3.   An augmented quaternion has a quaternion part and a translation part.  Hence, it is equivalent to a seven-dimensional vector.   If the quaternion is a unit quaternion, then the augmented quaternion is called an augmented unit quaternion.  This needs an additional spherical constraint.  We define addition and multiplication for augmented quaternions.  We show that an augmented quaternion is invertible if and only if its quaternion part is invertible, and give the inverse formula.  In particular, when the involved augmented quaternions are augmented unit quaternions, the multiplication and inverse operations are smooth.  We show that the movement of a rigid body in the 3D space can be represented by an augmented unit quaternion.  We also show that the set of the augmented unit quaternions form a Lie group under its multiplication.    In Section 4, by means of augmented unit quaternions,
we study the motion error model and kinematics.  Then, in Section 5, we formulate the hand-eye calibration problem and the SLAM problem as augmented unit quaternion optimization problems.    The optimization problems are smooth, with some spherical constraints.   Some final remarks are made in Section 6.   An appendix on motion optimization is attached.

\section{Quaternions and Dual Quaternions}

The field of real numbers is denoted as $\Re$.  
The space of three-dimensional real vectors is denoted as $\mathbb V$.  We use small letters with overline arrows, such as $\pp, \pq$, to denote three-dimensional vectors.  For $\pp \in \mathbb V$, we may denote that $\pp = [p_1, p_2, p_3]$, but we still regard $\pp$ as a column vector and its transpose $\pp^\top$ a row vector. In particular, denote $\vec{0} := [0, 0, 0]$.

A {\bf quaternion} $\tilde q = [q_0, q_1, q_2, q_3]$ is a real four-dimensional vector.   We use a tilde symbol to distinguish a quaternion.   We may also write $\tilde q = [q_0, \pq ]$, where $\pq = [q_1, q_2, q_3] \in \mathbb V$.  See \cite{CKJC16, Da99, WHYZ12}.
Denote the set of all quaternions by $\mathbb Q$.   Suppose that we have two quaternions
$\tilde p = [p_0, \pp], \tilde q = [q_0, \pq] \in \mathbb Q$, where $p = [p_1, p_2, p_3], q = [q_1, q_2, q_3] \in \mathbb V$.  The sum of $\tilde p$ and $\tilde q$ is defined as
$$\tilde p + \tilde q = [p_0+q_0, \pp+\pq].$$
Denote $\tilde 0 := [0, 0, 0, 0] \in \mathbb Q$ as the zero element of $\mathbb Q$.
The product of $\tilde p$ and $\tilde q$ is defined by
$$\tilde p\tilde q = [p_0q_0-\pp \cdot \pq, p_0\pq+q_0\pp+\pp \times \pq],$$
where $\pp \cdot \pq$ is the dot product, i.e., inner product of $\pp$ and $\pq$, with
$$\pp \cdot \pq \equiv \pp^\top \pq = p_1q_1+p_2q_2+p_3q_3,$$
and $\pp \times \pq$ is the cross product of $\pp$ and $\pq$, with
$$\pp \times \pq = [p_2q_3-p_3q_2,-p_1q_3+p_3q_1, p_1q_2-p_2q_1] = - \pq \times \pp.$$
Thus, in general, $\tilde p \tilde q \not = \tilde q \tilde p$, and we have $\tilde p\tilde q = \tilde q\tilde p$ if and only if $\pp \times \pq = \vec{0}$, i.e., either $\pp = \vec{0}$ or $\pq = \vec{0}$, or $\pp = \alpha \pq$ for some real number $\alpha$.   Though the quaternion multiplication is not commutative, but it is associative, i.e., for any $\tilde p, \tilde q, \tilde s \in \mathbb Q$, we have
$$(\tilde p\tilde q)\tilde s = \tilde p(\tilde q\tilde s).$$
For convenience, denote
\begin{equation} \label{cross}
T(\pq)=\left(\begin{aligned} 0 \ \ & q_3 & -q_2 \\
-q_3 \ \ & \ \ 0 & q_1\\ q_2 \ \ & \ \ -q_1 & 0 \end{aligned}\right).
\end{equation}
Then we have $\pp\times \pq = T(\pq)\pp = T(\pp)^\top \pq$.

The conjugate of a quaternion $\tilde q = [q_0, q_1, q_2, q_3] \in \mathbb Q$ is defined as $\tilde q^* = [q_0, -q_1, -q_2, -q_3]$.    Let $\tilde 1 := [1, 0, 0, 0] \in \mathbb Q$.  Then for any $\tilde q \in \mathbb Q$, we have $\tilde q\tilde 1 = \tilde 1\tilde q = \tilde q$, i.e., $\tilde 1$ is the idenity element of $\mathbb Q$.     For any $\tilde p, \tilde q \in \mathbb Q$, we have
$$(\tilde p\tilde q)^* = \tilde q^*\tilde p^*.$$

Suppose that $\tilde p, \tilde q \in \mathbb Q$, $\tilde p \tilde q = \tilde q \tilde p = \tilde 1$.  Then we say that $\tilde p$ is invertible and its inverse is $\tilde p^{-1} = \tilde q$.

Suppose that $\tilde q = [q_0, q_1, q_2, q_3] \in \mathbb Q$.  Then its magnitude is defined as
$$|\tilde q| := \sqrt{q_0^2+q_1^2+q_2^2 + q_3^2}.$$
And $\tilde q$ is invertible if and only if $|\tilde q|$ is positive.  In this case, we have
\begin{equation} \label{eq:Qinverse}
\tilde q^{-1} = {\tilde q^* \over |\tilde q|^2}.
\end{equation}

A quaternion $\tilde q \in \mathbb Q$ is called a {\bf unit quaternion} if $|\tilde q|=1$.
Denote the set of all unit quaternions by $\mathbb U$.   

If $\tilde p, \tilde q \in \mathbb U$, then $\tilde p\tilde q \in \mathbb U$.  For any $\tilde q \in \mathbb U$, we have
$$\tilde q\tilde q^* = \tilde q^* \tilde q = \tilde 1,$$
i.e., $\tilde q$ is invertible and $\tilde q^{-1} = \tilde q^*$.

It is well-known that a unit quaternion can represent the rotation of a rigid body \cite{Ku99}.

A quaternion $\tilde q = [0, q_1, q_2, q_3] \in \mathbb Q$ is called a {\bf vector quaternion}.  A quaternion $\tilde q \in \mathbb Q$ is a vector quaternion if and only if $\tilde q = - \tilde q^*$.
Suppose that $\tilde q$ is a vector quaternion and $\tilde p$ is a quaternion, then $\tilde p^*\tilde q\tilde p$ is still a vector quaternion.

If a unit quaternion $\tilde q \in \mathbb U$ represents the rotation of a rigid body in the space, then we may write $\tilde q = [\cos \theta, \pl\sin \theta]$, where $0 \le \theta < 2\pi$ is the rotation angle of the rigid body, $\pl$ is the rotation axis, which is a unit vector in $\mathbb V$.   The logarithm of $\tilde q = [\cos \theta, \pl\sin \theta] \in \mathbb U$ is a vector quaternion: $\ln \tilde q := [0, \theta \pl]$.

A {\bf dual quaternion} $\hat q = [\tilde q; \tilde q_d]$ is a real eight-dimensional vector.  It consists of two quaternions $\tilde q$, the standard part of $\hat q$, and $\tilde q
$, the dual part of $\hat q$.  We use a hat symbol to distinguish a dual quaternion.  We denote the set of dual quaternions as $\hat {\mathbb Q}$.   Let $\hat p = [\tilde p; \tilde p_d], \hat q = [\tilde q; \tilde q_d] \in \hat {\mathbb Q}$.   Then the sum of $\hat p$ and $\hat q$ is
$$\hat p + \hat q = [\tilde p + \tilde q; \tilde p_d+\tilde q_d]$$
and the product of $\hat p$ and $\hat q$ is
$$\hat p\hat q = [\tilde p\tilde q; \tilde p\tilde q_d + \tilde p_d\tilde q].$$
Again, in general, $\hat p \hat q \not = \hat q\hat p$, but for any $\hat p, \hat q, \hat s \in \hat {\mathbb Q}$, we have
$$(\hat p\hat q)\hat s = \hat p (\hat q\hat s).$$
The conjugate of $\hat p = [\tilde p; \tilde p_d]$ is $\hat p^* = [\tilde p^*; \tilde p_d^*]$.

A dual quaternion $\hat p = [\tilde p; \tilde p_d]$ is called a {\bf unit dual quaternion} if $\tilde p$ is a unit quaternion and
\begin{equation} \label{eq:orthogonality}
\tilde q\tilde q_d^* + \tilde q_d\tilde q^* = 0.
\end{equation}
The set of unit dual quaternions is denoted as $\hat {\mathbb U}$.

A dual quaternion $\hat p = [\tilde p; \tilde p_d]$ is called a {\bf vector dual quaternion} if both $\tilde p$ and $\tilde p_d$ are vector quaternions.


\bigskip

\section{Augmented Quaternion}

An {\bf augmented quaternion} (AQ) $x = [\tilde p, \pt]$ is a real seven-dimensional vector.  Here, $\tilde p \in \mathbb Q$ is a quaternion and $\pt \in \mathbb V$ is a three-dimensional vector.  We call $\tilde p$
the {\bf quaternion part} of $x$, and $\pt$ the {\bf translation part} of $x$.  If $\tilde p$ is a unit quaternion, then $x$ is called an {\bf augmented unit quaternion} (AUQ).  Denote the set of AQs as $\mathbb A$, and the set of AUQs as $\mathbb {AU}$.

We may also denote $x$ as $x = [p_0, \pp, \pt]$, where $p_0$ is a scalar, i.e., a real number, and $\pp \in \mathbb V$.

Let $x = [\tilde p, \pt] = [p_0, \pp, \pt] \in \mathbb A$, $y = [\tilde q, \pu] = [q_0, \pq, \pu] \in \mathbb A$ and $\alpha \in \Re$.   Then we define the following operations:
\begin{equation}
x + y := [\tilde p + \tilde q, \pt + \pu],
\end{equation}
\begin{equation}
\alpha x := [\alpha \tilde p, \alpha \pt].
\end{equation}
Denote $0_A := [\tilde 0, \vec{0}]$.  Then $0_A$ is the zero element of $\mathbb A$.
These operations make $\mathbb A$ a vector space over reals.

We further define multiplication in $\mathbb A$ by
\begin{equation}  \label{eq:Aproduct}
x \circ y = [\tilde p\tilde q, \pu + R(\tilde q)^\top \pt],
\end{equation}
where
\begin{equation} \label{eq:R}
R(\tilde q)^\top = 2\pq\pq^\top + (q_0^2 - \pq^\top\pq)I -2q_0T(\pq)^\top.
\end{equation}
The addition and multiplication only involves polynomials, which are infinite times continuously differentiable.   For $R$, we have the following proposition.

\begin{Prop} \label{PropR}
For any $\tilde p, \tilde q \in \mathbb Q$, we have

(a) $R(\tilde p\tilde q) = R(\tilde p)R(\tilde q)$,

(b) $R(\tilde q^*) = R(\tilde q)^\top$. 
\end{Prop}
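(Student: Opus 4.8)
The plan is to read $R(\tilde q)$ as the matrix of the quaternion conjugation map on vectors, and then to deduce both identities from properties of quaternion multiplication that are already available. The crucial preliminary step I would carry out is the identity
\[ \tilde q^*\,\tilde v\,\tilde q = [0,\ R(\tilde q)^\top \pv] \qquad \text{for all } \tilde q = [q_0,\pq] \in \mathbb Q,\ \pv \in \mathbb V, \]
where $\tilde v := [0,\pv]$ is the vector quaternion associated with $\pv$. Since $\tilde v$ is a vector quaternion and $\tilde q$ a quaternion, $\tilde q^*\tilde v\tilde q$ is again a vector quaternion (recalled in Section 2), so only its vector part is at issue. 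To pin that part down I would expand $\tilde q^*(\tilde v\tilde q)$ twice by the quaternion product rule, then simplify using $\pp\times\pq = T(\pq)\pp$, the vanishing scalar triple product $\pq\cdot(\pv\times\pq)=0$, the expansion $\pq\times(\pv\times\pq) = (\pq^\top\pq)\pv - (\pq^\top\pv)\pq$, and the antisymmetry $T(\pq)^\top = -T(\pq)$; collecting the coefficients of $\pv$, of $(\pq^\top\pv)\pq$, and of $\pv\times\pq$ then reproduces exactly the three terms of $R(\tilde q)^\top$ in \eqref{eq:R}.

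With this identity in hand, part (a) is purely formal. For an arbitrary $\pv\in\mathbb V$ and $\tilde v:=[0,\pv]$, I would use $(\tilde p\tilde q)^* = \tilde q^*\tilde p^*$ together with associativity of quaternion multiplication to write $[0,\ R(\tilde p\tilde q)^\top\pv] = (\tilde p\tilde q)^*\tilde v(\tilde p\tilde q) = \tilde q^*(\tilde p^*\tilde v\tilde p)\tilde q$. Applying the identity to $\tilde p$ shows $\tilde p^*\tilde v\tilde p = [0,\ R(\tilde p)^\top\pv]$, a vector quaternion; applying it again to $\tilde q$ turns $\tilde q^*[0,\ R(\tilde p)^\top\pv]\tilde q$ into $[0,\ R(\tilde q)^\top R(\tilde p)^\top\pv]$. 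Hence $R(\tilde p\tilde q)^\top\pv = R(\tilde q)^\top R(\tilde p)^\top\pv$ for every $\pv$, so $R(\tilde p\tilde q)^\top = R(\tilde q)^\top R(\tilde p)^\top = (R(\tilde p)R(\tilde q))^\top$, and transposing gives (a).

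For part (b) I would actually bypass the identity and argue straight from \eqref{eq:R}: replacing $(q_0,\pq)$ by $(q_0,-\pq)$ leaves $q_0$, $\pq\pq^\top$ and $\pq^\top\pq$ unchanged and, because $T(-\pq) = -T(\pq)$, only flips the sign of the term $-2q_0T(\pq)^\top$, so that $R(\tilde q^*)^\top = 2\pq\pq^\top + (q_0^2-\pq^\top\pq)I + 2q_0T(\pq)^\top$. Writing $R(\tilde q)^\top = S + A$ with $S := 2\pq\pq^\top + (q_0^2-\pq^\top\pq)I$ symmetric and $A := -2q_0T(\pq)^\top$ antisymmetric, the previous line says $R(\tilde q^*)^\top = S - A = (S+A)^\top = (R(\tilde q)^\top)^\top = R(\tilde q)$, i.e. $R(\tilde q^*) = R(\tilde q)^\top$. (One could instead obtain (b) from the conjugation identity applied to $\tilde q^*$, but this direct computation is shorter.)

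The only delicate point is the sign bookkeeping in establishing the conjugation identity: the expansion involves two cross products and one vector triple product, and the passage between $\pv\times\pq$, $T(\pq)\pv$ and $-T(\pq)^\top\pv$ is where an error is most likely. Once that identity is verified, (a) and (b) are immediate.
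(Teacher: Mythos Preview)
Your proposal is correct and follows essentially the same route as the paper: both hinge on the conjugation identity relating $R$ to quaternion sandwich products, combined with associativity and $(\tilde p\tilde q)^*=\tilde q^*\tilde p^*$ for (a), and on the direct substitution $\pq\mapsto -\pq$ in \eqref{eq:R} together with $T(-\pq)=T(\pq)^\top$ for (b). The only cosmetic difference is that the paper states the identity as $\tilde q\,\tilde t\,\tilde q^*=[0,\,R(\tilde q)\pt]$ (so the multiplicativity comes out for $R$ directly), whereas you use the transposed version $\tilde q^*\,\tilde v\,\tilde q=[0,\,R(\tilde q)^\top\pv]$ and then transpose at the end; this is a harmless change of handedness.
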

 \begin{proof}
     (a) For any $\vec{t}\in\mathbb V$ and $\tilde t=[0,\vec{t}]$, it holds that
     \begin{equation}\label{equ:R}
         [0, R(\tilde q)\pt]=\tilde q \tilde t\tilde q^*.
     \end{equation}
    Combining \eqref{equ:R} with $(\tilde p\tilde q) \tilde{t}(\tilde q^*\tilde p^*)=\tilde p(\tilde q \tilde{t}\tilde q^*)\tilde p^*$, this result is derived.

     (b)  It can be derived directly by    $T(-\vec{q})=T(\vec{q})^\top$ and \eqref{eq:R}.
 \end{proof}

Like quaternions, in general, $x \circ y \not = y \circ x$.   However, the multiplication is still associative.  We have the following proposition.

\begin{Prop} \label{P3.2}
For any $x, y, z \in \mathbb A$, we have
\begin{equation} \label{asso}
(x \circ y) \circ z = x \circ (y \circ z).
\end{equation}
\end{Prop}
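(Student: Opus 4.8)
The plan is to verify the two components of $(x\circ y)\circ z$ and $x\circ(y\circ z)$ separately, using the definition \eqref{eq:Aproduct} together with the two facts already available: associativity of quaternion multiplication and Proposition \ref{PropR}(a), which says $R$ is multiplicative. Write $x = [\tilde p, \pt]$, $y = [\tilde q, \pu]$, $z = [\tilde r, \pv]$. For the quaternion part, both sides produce $(\tilde p\tilde q)\tilde r = \tilde p(\tilde q\tilde r)$, so that component matches immediately by the associativity of $\mathbb Q$. The real content is the translation part.

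For the translation component I would compute $x\circ y = [\tilde p\tilde q,\ \pu + R(\tilde q)^\top\pt]$ first, then apply $\circ$ with $z$ on the right: the translation part of $(x\circ y)\circ z$ is $\pv + R(\tilde r)^\top\bigl(\pu + R(\tilde q)^\top\pt\bigr) = \pv + R(\tilde r)^\top\pu + R(\tilde r)^\top R(\tilde q)^\top\pt$. Going the other way, $y\circ z = [\tilde q\tilde r,\ \pv + R(\tilde r)^\top\pu]$, so the translation part of $x\circ(y\circ z)$ is $\bigl(\pv + R(\tilde r)^\top\pu\bigr) + R(\tilde q\tilde r)^\top\pt$. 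Comparing the two expressions, they agree if and only if $R(\tilde r)^\top R(\tilde q)^\top = R(\tilde q\tilde r)^\top$.

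That identity is exactly the transpose of Proposition \ref{PropR}(a): taking transposes in $R(\tilde q\tilde r) = R(\tilde q)R(\tilde r)$ gives $R(\tilde q\tilde r)^\top = R(\tilde r)^\top R(\tilde q)^\top$, which is precisely what is needed. So the proof reduces to assembling these pieces; there is no genuine obstacle, since the one nontrivial ingredient — multiplicativity of $R$ — has already been established. The only point requiring care is bookkeeping the order of factors: because $R(\tilde q)^\top$ acts on the left of a translation vector and the product rule introduces $\tilde q\tilde r$ (not $\tilde r\tilde q$), one must be careful that the reversal of order under transposition matches the order in which the maps $R(\tilde r)^\top$ and $R(\tilde q)^\top$ are composed. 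Once that alignment is checked, equation \eqref{asso} follows.
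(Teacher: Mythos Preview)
Your argument is correct and follows essentially the same route as the paper: compute both sides from the definition \eqref{eq:Aproduct}, note the quaternion parts match by associativity of $\mathbb Q$, and reduce the equality of translation parts to $R(\tilde q\tilde r)^\top = R(\tilde r)^\top R(\tilde q)^\top$, which is the transpose of Proposition \ref{PropR}(a). The only difference is notational (the paper uses $\tilde s$ for the third quaternion and writes the key factor as $(R(\tilde q)R(\tilde s))^\top$ directly).
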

\begin{proof}  Assume that $x = [\tilde p, \pt], y = [\tilde q, \pu]$ and $z = [\tilde s, \pv]$.  Then we have
\begin{eqnarray*}
(x \circ y) \circ z & = & [\tilde p\tilde q, \pu + R(\tilde q)^\top \pt] \circ [\tilde s, \pv]\\
& = & [(\tilde p\tilde q)\tilde s, \pv + R(\tilde s)^\top (\pu + R(\tilde q)^\top \pt)]\\
& = & [\tilde p(\tilde q\tilde s), \pv + R(\tilde s)^\top \pu + (R(\tilde q)R(\tilde s))^\top \pt].\\
\end{eqnarray*}
\begin{eqnarray*}
x \circ (y \circ z) & = & [\tilde p, \pt]\circ [\tilde q\tilde s, \pv + R(\tilde s)^\top \pu]\\
& = & [\tilde p(\tilde q\tilde s), \pv + R(\tilde s)^\top \pu + R(\tilde q\tilde s)^\top \pt].\\
\end{eqnarray*}
By Proposition \ref{PropR}, we have (\ref{asso}).
\end{proof}

Denote $e := [\tilde 1, \vec{0}]$.  Then $e \in \mathbb {AU} \subset \mathbb A$.  For any $x \in \mathbb A$, we have $x \circ e = e \circ x = x$.   Hence, $e$ is the identity element of $\mathbb A$.  

Suppose that $x, y \in \mathbb A$.  If $x \circ y = y \circ x = e$, then $x$ is called invertible, $y$ is called the inverse of $x$, and denoted as $x^{-1}$.

We have the following theorem.

\begin{Thm} \label{t3.1}
Suppose that $x = [\tilde p, \pt]  \in \mathbb A$.  Then 
$x$ is invertible if and only if $\tilde p$ is invertible.  In this case, we have the following formula:
\begin{equation}  \label{eq:Ainverse}
x^{-1} = \left[\tilde p^{-1}, -{R(\tilde p) \over |\tilde p|^4}\pt\right].
\end{equation}
\end{Thm}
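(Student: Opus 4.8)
The plan is to prove both implications by direct computation with the multiplication rule \eqref{eq:Aproduct}, using Proposition \ref{PropR} together with two elementary consequences of the explicit formula \eqref{eq:R}. First I would record the needed facts about $R$. From \eqref{eq:R}, every entry of $R(\tilde q)$ is a homogeneous quadratic polynomial in $(q_0,q_1,q_2,q_3)$, so $R(\alpha \tilde q) = \alpha^2 R(\tilde q)$ for all $\alpha \in \Re$; and setting $\pq = \vec{0}$ gives $R([q_0,\vec{0}]) = q_0^2 I$, in particular $R(\tilde 1) = I$. Combining these with Proposition \ref{PropR}, for any $\tilde p \in \mathbb Q$,
\[
R(\tilde p)R(\tilde p)^\top = R(\tilde p)R(\tilde p^*) = R(\tilde p\tilde p^*) = R\big([|\tilde p|^2,\vec{0}]\big) = |\tilde p|^4 I,
\]
and likewise $R(\tilde p)^\top R(\tilde p) = R(\tilde p^*\tilde p) = |\tilde p|^4 I$. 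Hence, when $\tilde p$ is invertible (equivalently $|\tilde p| > 0$), the matrix $R(\tilde p)$ is nonsingular and $R(\tilde p)^{-1} = R(\tilde p)^\top/|\tilde p|^4$; moreover, by Proposition \ref{PropR}(a), $R(\tilde p)R(\tilde p^{-1}) = R(\tilde p^{-1})R(\tilde p) = R(\tilde 1) = I$, so $R(\tilde p^{-1}) = R(\tilde p)^{-1}$.

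For the forward direction, suppose $x$ is invertible with inverse $y = [\tilde q, \pu]$. Reading off the quaternion parts of $x \circ y = e$ and of $y \circ x = e$ via \eqref{eq:Aproduct} yields $\tilde p\tilde q = \tilde 1$ and $\tilde q\tilde p = \tilde 1$, so $\tilde p$ is invertible with $\tilde p^{-1} = \tilde q$.

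For the converse, assume $\tilde p$ is invertible and set $y := \left[\tilde p^{-1}, -\frac{R(\tilde p)}{|\tilde p|^4}\pt\right]$, the value claimed in \eqref{eq:Ainverse}. Using \eqref{eq:Aproduct}, $x \circ y$ has quaternion part $\tilde p\tilde p^{-1} = \tilde 1$ and translation part $-\frac{R(\tilde p)}{|\tilde p|^4}\pt + R(\tilde p^{-1})^\top\pt$. Since $R(\tilde p^{-1}) = R(\tilde p)^{-1} = R(\tilde p)^\top/|\tilde p|^4$ from the first paragraph, its transpose is $R(\tilde p^{-1})^\top = R(\tilde p)/|\tilde p|^4$, so the translation part vanishes and $x \circ y = e$. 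Similarly, $y \circ x$ has quaternion part $\tilde p^{-1}\tilde p = \tilde 1$ and translation part $\pt + R(\tilde p)^\top\!\left(-\frac{R(\tilde p)}{|\tilde p|^4}\pt\right) = \pt - \frac{R(\tilde p)^\top R(\tilde p)}{|\tilde p|^4}\pt = \pt - \pt = \vec{0}$, using $R(\tilde p)^\top R(\tilde p) = |\tilde p|^4 I$. Hence $y \circ x = e$ as well, so $x$ is invertible and \eqref{eq:Ainverse} holds.

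I do not expect a serious obstacle here: once the identity $R(\tilde p)R(\tilde p^*) = |\tilde p|^4 I$ and the homogeneity and normalization of $R$ are in hand, the rest is bookkeeping. The only points requiring care are that the multiplication is noncommutative, so both $x \circ y = e$ and $y \circ x = e$ must be verified, and that the transposes be tracked consistently when passing among $R(\tilde p^{-1})$, $R(\tilde p)^\top$, and $R(\tilde p)^{-1}$.
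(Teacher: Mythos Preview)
Your proof is correct and follows essentially the same route as the paper: both directions are handled by reading off the quaternion and translation parts of $x\circ y$ and $y\circ x$ via \eqref{eq:Aproduct} and invoking Proposition~\ref{PropR}. The paper's proof merely says ``we may verify that $x\circ y = y\circ x = e$'' at the key step, whereas you carry out that verification explicitly by first deriving the auxiliary identities $R(\tilde 1)=I$, $R(\alpha\tilde q)=\alpha^2 R(\tilde q)$, and $R(\tilde p)R(\tilde p)^\top = R(\tilde p)^\top R(\tilde p) = |\tilde p|^4 I$; this added detail is sound and is exactly what the paper's sketch is relying on.
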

\begin{proof}
Suppose that $x$ is invertible and has an inverse $y = [\tilde q, u]$.  Then by $x \circ y = y \circ x = e = [\tilde 1, \vec{0}]$ and (\ref{eq:Aproduct}), we have $\tilde p \tilde q = \tilde q \tilde p = \tilde 1$, i.e., $\tilde p$ is invertible.

On the other hand, assume that $\tilde p$ is invertible.  Then $|\tilde p| > 0$.  Let
$$y = \left[\tilde p^{-1}, -{R(\tilde p) \over |\tilde p|^4}\pt\right].$$
By (\ref{eq:Aproduct}) and Proposition \ref{PropR}, we may verify that $x \circ y = y \circ x = e$.  Then the theorem is proved.
\end{proof}

In particular, if $x = [\tilde p, \pt]$ is an AUQ, then $|\tilde p| = 1$.  The formula (\ref{eq:Ainverse}) is very simple then.   We have the following corollary.

\begin{Cor} \label{c3.2}
If $x = [\tilde p, \pt] \in \mathbb {AU}$, then it always has an inverse
\begin{equation}  \label{eq:AUinverse}
x^{-1} = \left[\tilde p^*, -R(\tilde p)\pt\right].
\end{equation}
\end{Cor}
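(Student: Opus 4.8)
The plan is to obtain this as an immediate specialization of Theorem~\ref{t3.1}. Since $x = [\tilde p, \pt] \in \mathbb{AU}$, the quaternion part $\tilde p$ is by definition a unit quaternion, so $|\tilde p| = 1 > 0$; in particular $\tilde p$ is invertible. Theorem~\ref{t3.1} then guarantees that $x$ is invertible and supplies the formula \eqref{eq:Ainverse}. It remains only to simplify that formula in the unit case: because $|\tilde p| = 1$ we have $|\tilde p|^4 = 1$, and because $\tilde p \in \mathbb U$ we have $\tilde p^{-1} = \tilde p^*$ (recorded earlier for unit quaternions). Substituting these two identities into \eqref{eq:Ainverse} yields $x^{-1} = \left[\tilde p^*, -R(\tilde p)\pt\right]$, which is \eqref{eq:AUinverse}.

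For completeness I would also record a direct verification that does not invoke Theorem~\ref{t3.1}. Put $y = \left[\tilde p^*, -R(\tilde p)\pt\right]$. Using the multiplication rule \eqref{eq:Aproduct} together with Proposition~\ref{PropR}(b) (so that $R(\tilde p^*)^\top = R(\tilde p)$ and $R(\tilde p)^\top = R(\tilde p^*)$), the product $x \circ y$ has quaternion part $\tilde p\tilde p^* = \tilde 1$ and translation part $-R(\tilde p)\pt + R(\tilde p^*)^\top \pt = -R(\tilde p)\pt + R(\tilde p)\pt = \vec{0}$, so $x \circ y = e$. Likewise $y \circ x$ has quaternion part $\tilde p^*\tilde p = \tilde 1$ and translation part $\pt - R(\tilde p)^\top R(\tilde p)\pt$; here one uses Proposition~\ref{PropR}(a) and $R(\tilde p^*)R(\tilde p) = R(\tilde p^*\tilde p) = R(\tilde 1) = I$ (the base evaluation $R(\tilde 1) = I$ being read off directly from \eqref{eq:R}) to see that this translation part is $\vec{0}$ as well, hence $y \circ x = e$.

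There is essentially no obstacle here: the statement is a routine corollary once Theorem~\ref{t3.1} and Proposition~\ref{PropR} are available. The only two points worth making explicit are that $R$ restricts to an orthogonal matrix on unit quaternions — which is precisely what lets one drop the $1/|\tilde p|^4$ factor and what makes $R(\tilde p)^\top R(\tilde p) = I$ in the direct check — and the evaluation $R(\tilde 1) = I$.
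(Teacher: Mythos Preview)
Your proposal is correct and matches the paper's approach: the paper states the corollary immediately after Theorem~\ref{t3.1} with the remark that for an AUQ one has $|\tilde p|=1$, so \eqref{eq:Ainverse} simplifies; no further proof is given. Your first paragraph is exactly this specialization (using $|\tilde p|^4=1$ and $\tilde p^{-1}=\tilde p^*$), and your optional direct verification via Proposition~\ref{PropR} is a correct extra check that the paper does not include.
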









Thus, the mapping $f(x) \equiv x^{-1} \in C^\infty$, i.e., infinite times continuously differentiable in this case.



For $x = [\tilde p, \pt] \in \mathbb {AU}$, we may use $\tilde p$ to represent the rotation of a rigid body, and $\pt$ to represent the translation of that rigid body.  Then $x$ represents the $3D$ movement of that rigid body.

We have the following theorem.

\begin{Thm}\label{thm:xoy3Dmeaning}
Suppose that $x, y \in \mathbb {AU}$.  Then $x \circ y$ represents the combined 3D movement of a rigid body, with $y$ succeeded by $x$.
\end{Thm}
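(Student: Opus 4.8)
The plan is to make the phrase ``the 3D movement represented by an AUQ'' precise, and then reduce the statement to a one-line computation built on Proposition~\ref{PropR}. For $x = [\tilde p, \pt] \in \mathbb{AU}$ I would take the associated movement to be the rigid motion $\Phi_x\colon \mathbb V \to \mathbb V$ given by $\Phi_x(\pr) = R(\tilde p)(\pr + \pt)$; that is, a point at position $\pr$ is first displaced by $\pt$ and then rotated by the unit quaternion $\tilde p$, so its net displacement in the world frame is $R(\tilde p)\pt$. This is consistent with identifying $\tilde p$ with the rotation and $\pt$ with the translation of the rigid body: by \eqref{equ:R} we have $[0, R(\tilde p)\pr] = \tilde p\,[0,\pr]\,\tilde p^*$, so $R(\tilde p)$ is precisely the rotation matrix of $\tilde p$; and since $\tilde p \in \mathbb U$, Proposition~\ref{PropR} gives $R(\tilde p)R(\tilde p^*) = R(\tilde p\tilde p^*) = R(\tilde 1) = I$, so $R(\tilde p)$ is orthogonal, $\Phi_x$ is a rigid motion with $\Phi_e = \mathrm{id}$, and $R(\tilde p)^{-1} = R(\tilde p)^\top = R(\tilde p^*)$. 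Note also that the quaternion part of $x\circ y$ is again a unit quaternion, so $x \circ y \in \mathbb{AU}$ and does represent a 3D movement.

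Next I would carry out the composition. Performing the movement $y = [\tilde q, \pu]$ first and then the movement $x = [\tilde p, \pt]$ amounts to applying $\Phi_x \circ \Phi_y$, and for every $\pr \in \mathbb V$,
\[
\Phi_x\big(\Phi_y(\pr)\big) = R(\tilde p)\big(R(\tilde q)(\pr + \pu) + \pt\big) = R(\tilde p)R(\tilde q)(\pr + \pu) + R(\tilde p)\pt .
\]
By Proposition~\ref{PropR}(a), $R(\tilde p)R(\tilde q) = R(\tilde p\tilde q)$, while $R(\tilde p)\pt = R(\tilde p\tilde q)R(\tilde q)^{-1}\pt = R(\tilde p\tilde q)R(\tilde q)^\top\pt$ by the previous paragraph; hence
\[
\Phi_x\big(\Phi_y(\pr)\big) = R(\tilde p\tilde q)\big(\pr + \pu + R(\tilde q)^\top\pt\big) .
\]
The right-hand side is exactly $\Phi_{x\circ y}(\pr)$, since by \eqref{eq:Aproduct} one has $x\circ y = [\tilde p\tilde q,\ \pu + R(\tilde q)^\top\pt]$. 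Thus $\Phi_{x\circ y} = \Phi_x\circ\Phi_y$, which says that $x\circ y$ represents the movement of $y$ succeeded by $x$, as claimed.

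The computation is routine; the only point that really needs care --- and hence the main obstacle --- is fixing the correct convention in the first step, namely that the translation part of an AUQ is expressed in the pre-rotation (body) frame. With the alternative ``rotate, then translate in the world frame'' convention $\Phi_x(\pr) = R(\tilde p)\pr + \pt$, the composite $\Phi_x\circ\Phi_y$ would carry the translation $R(\tilde p)\pu + \pt$ instead of $\pu + R(\tilde q)^\top\pt$, disagreeing with \eqref{eq:Aproduct}; so it is the multiplication rule \eqref{eq:Aproduct} that dictates which convention makes the theorem true. (Alternatively, one may identify $x = [\tilde p,\pt]$ with the unit dual quaternion $[\tilde p;\ \tfrac12\,\tilde p\,[0,\pt]]$, verify that this identification turns $\circ$ into dual quaternion multiplication, and then invoke the known fact that unit dual quaternion multiplication composes 3D movements, as in \cite{Qi22}.)
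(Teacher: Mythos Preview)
Your argument is correct, and in fact the paper's own convention (stated just after the theorem) is exactly the one you fix: $x=[\tilde p,\vec t]=[\tilde p,0]\circ[\tilde 1,\vec t]$, i.e.\ ``translate first, then rotate'', so your $\Phi_x(\pr)=R(\tilde p)(\pr+\pt)$ is on the nose.

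The route, however, differs from the paper's. The paper does not work with the action $\Phi_x$ on $\mathbb V$ at all; instead it passes through unit dual quaternions: it identifies $x=[\tilde p,\pt]$ with $\hat p=[\tilde p;\tfrac12\,\tilde p\,\tilde t]$ (and similarly $y$ with $\hat q$), multiplies $\hat p\hat q$ in $\hat{\mathbb U}$, rewrites the dual part using $\tilde q^*\tilde t\tilde q$ and its vector value $R(\tilde q)^\top\pt$, and observes that the result is precisely the dual quaternion associated with $x\circ y$. The conclusion then rests on the known fact that multiplication of unit dual quaternions composes rigid motions. This is exactly the alternative you sketch in your final parenthetical remark. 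Your direct computation with $\Phi_x$ and Proposition~\ref{PropR} is more elementary and self-contained---it avoids importing dual quaternion machinery and makes the theorem an immediate consequence of the multiplication rule \eqref{eq:Aproduct}---while the paper's approach has the advantage of simultaneously exhibiting the explicit isomorphism between $(\mathbb{AU},\circ)$ and the unit dual quaternions, which is of independent interest for the comparison with the models in~\cite{Qi22,Qi22a}.
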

\begin{proof}
Let $x = [\tilde p, \pt]$ and $y = [\tilde q, \pu]$.
We know that the motion of a rigid body in the 3D space can be represented by unit dual quaternions.   Then AQ $x$ is corresponding to a unit dual quaternion
\begin{equation}
\hat p = \left[\tilde p; {1 \over 2}\tilde p\tilde t\right],
\end{equation}
and
$y$ is corresponding to a unit dual quaternion
\begin{equation} \label{ee13}
\hat q = \left[\tilde q; {1 \over 2}\tilde q\tilde u\right].
\end{equation}
Since (\ref{eq:orthogonality}) holds directly for $\hat p$ and $\hat q$, combining $\hat p$ with $\hat q$, we have
\begin{eqnarray*}
\hat p \hat q & = & \left[\tilde p \tilde q; {1 \over 2} \tilde p(\tilde t\tilde q + \tilde q \tilde u)\right]\\
& = & \left[(\tilde p \tilde q); {1 \over 2}(\tilde p \tilde q)(\tilde q^* \tilde t \tilde q + \tilde u)\right].
\end{eqnarray*}
Further, $\tilde q^* \tilde t \tilde q$ is a vector quaternion and its vector value is equal to $R(\tilde q)^T\vec{t}$.
By this and (\ref{eq:Aproduct}), we see that $x \circ y$ is corresponding to $\hat p\hat q$.  The conclusion follows.
\end{proof}

For $x = [\tilde p, \pt]$, let $[\tilde 1,\vec{t}]$ represent a pure translation, and $[\tilde p,0]$ represent a pure rotation. Then we have
\begin{equation}
    x=[\tilde p,0]\circ[\tilde 1,\vec{t}].
\end{equation}
 In other words, $x$ represents that the rigid body first translates along direction $\vec{t}$ and then rotates along $\tilde p$.
 Similarly, we have
 \begin{equation}
    x^{-1}=[\tilde 1,-\vec{t}]\circ [\tilde p^*,0].
\end{equation}
 In other words, the inverse of $x$ represents that the rigid body first  rotates back along $\tilde p^*$ and then translates back along direction $-\vec{t}$.

If $y = [0, \pr, \pt] \in \mathbb V$, then $y$ is called an {\bf augmented vector quaternion} (AVQ).  Suppose that $y$ is an AVQ and $x$ is an invertible AQ, then we may show that $x \circ y \circ x^{-1}$ is still an AVQ.   All the AVQs form a six-dimensional space.  We denote it as $\mathbb {AV}$.

If $x = [\tilde p, \pt] \in \mathbb {AU}$ represents the movement of a rigid body in the 3D space, then we may write $x = \left[\cos {\theta \over 2}, \pl\sin {\theta \over 2}, \pt\right]$, where $0 \le \theta < 2\pi$ is the rotation angle of the rigid body, $\pl$ is the rotation axis, which is a unit vector in $\mathbb V$, and $\pt \in \mathbb V$ is the translation vector.   Similar to the logarithm of a unit quaternion, we may define the logarithm of an AUQ $x = \left[\cos {\theta \over 2}, \pl\sin {\theta \over 2}, \pt\right]$ as an AVQ: $\ln x := [0, {\theta \over 2}\pl, {1 \over 2}\pt]$.
Then $\ln x \in \mathbb {AV}$.

We have the following theorem on $\mathbb {AU}$.


\begin{Thm}\label{thm:Liegroup}
The AUQ set $\mathbb {AU}$ is a Lie group under the AQ multiplication.
\end{Thm}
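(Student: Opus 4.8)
The plan is to verify, in turn, the three ingredients of a Lie group: that $\mathbb{AU}$ carries a smooth manifold structure, that $(\mathbb{AU}, \circ)$ is a group, and that the multiplication and inversion maps are $C^\infty$. Much of the group-theoretic content has already been established above, so the proof is mostly a matter of assembling it and adding the differential-geometric observation.

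First I would pin down the manifold structure. An AUQ $x = [\tilde p, \pt]$ is exactly a pair consisting of a unit quaternion $\tilde p$, i.e.\ a point of the unit sphere $S^3 \subset \Re^4$, and an arbitrary translation vector $\pt \in \mathbb V = \Re^3$. Thus $\mathbb{AU}$ is identified with $S^3 \times \Re^3$, which sits inside $\mathbb A = \Re^7$ as the zero set of the map $x \mapsto |\tilde p|^2 - 1$; since the gradient of this map is $2[\tilde p, \vec{0}] \neq 0_A$ everywhere on $\mathbb{AU}$, the set $\mathbb{AU}$ is an embedded $6$-dimensional submanifold of $\Re^7$ (in particular Hausdorff and second countable).

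Next I would check the group axioms on $\mathbb{AU}$. Closure holds because $|\tilde p\tilde q| = |\tilde p|\,|\tilde q| = 1$ whenever $\tilde p, \tilde q \in \mathbb U$, so the quaternion part of $x \circ y$ is again a unit quaternion; associativity is precisely Proposition \ref{P3.2}; the element $e = [\tilde 1, \vec{0}]$ belongs to $\mathbb{AU}$ and is a two-sided identity; and by Corollary \ref{c3.2} every $x = [\tilde p, \pt] \in \mathbb{AU}$ has inverse $x^{-1} = [\tilde p^*, -R(\tilde p)\pt] \in \mathbb{AU}$, the quaternion part $\tilde p^*$ being a unit quaternion. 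Finally, for smoothness, the multiplication $(x, y) \mapsto x \circ y = [\tilde p\tilde q, \pu + R(\tilde q)^\top \pt]$ is polynomial in the coordinates of $x$ and $y$ — the quaternion product is bilinear and $R(\tilde q)^\top$ in (\ref{eq:R}) is quadratic — hence $C^\infty$ on all of $\mathbb A \times \mathbb A$, and likewise the inversion $x \mapsto x^{-1} = [\tilde p^*, -R(\tilde p)\pt]$ given by Corollary \ref{c3.2} is polynomial, hence $C^\infty$ on all of $\mathbb A$. Restricting these ambient smooth maps to the embedded submanifolds $\mathbb{AU} \times \mathbb{AU}$ and $\mathbb{AU}$, with images in $\mathbb{AU}$ by the closure and inverse formulas just noted, shows the group operations are smooth as maps of manifolds, so $\mathbb{AU}$ is a Lie group.

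I expect no genuine obstacle here; the only point deserving care is the manifold bookkeeping — verifying that $\mathbb{AU}$ is an embedded submanifold of $\Re^7$ and invoking the standard fact that the restriction of a globally defined smooth (here polynomial) map to embedded submanifolds of the source and target is again smooth — together with the observation that the simple polynomial inverse formula of Corollary \ref{c3.2} is valid on all of $\mathbb{AU}$, which is what makes smoothness of inversion immediate rather than requiring the more complicated formula (\ref{eq:Ainverse}).
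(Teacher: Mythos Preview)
Your proof is correct and follows essentially the same route as the paper: verify the group axioms via the identity $e$, Proposition~\ref{P3.2}, and Corollary~\ref{c3.2}; identify $\mathbb{AU}$ with the $6$-manifold $S^3\times\Re^3$; and read off smoothness from the polynomial formulas for multiplication and inversion. The only cosmetic differences are that you obtain the manifold structure via the regular-value theorem rather than directly as a product, and you check multiplication and inversion separately whereas the paper bundles them into the single map $(x,y)\mapsto x\circ y^{-1}$.
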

\begin{proof}  In the last section, we know that $e$ is the identity element of $\mathbb {AU}$.  By Theorem
\ref{t3.1}, every element of $\mathbb {AU}$ has an inverse.  By (\ref{eq:Aproduct}), the multiplication $\circ$ is closed in $\mathbb {AU}$, i.e., the product of two AUQs is still an AUQ.  By Proposition \ref{P3.2}, the multiplication $\circ$ is associative.   Thus, $\mathbb {AU}$ is a group.

We now show that $\mathbb {AU}$ is a manifold.  We see that $\mathbb {AU} = \mathbb U \times \mathbb V$.  It is well-known that $\mathbb U$ is diffeomorphic to manifold $S^3$ \cite{MR94}.  Thus, $\mathbb U$ is a manifold with three dimensions.  On the other hand, $\mathbb V$ is the three-dimensional space, thus a manifold with three dimensions.  Hence, $\mathbb {AU}$ is a manifold with six dimensions.

Finally, for any $x, y \in \mathbb {AU}$, denote $F(x, y) = x \circ y^{-1}$.  By Corollary \ref{c3.2} and (\ref{eq:Aproduct}), $F(x, y)$ is $C^\infty$.

Therefore, $\mathbb {AU}$ is a Lie group under the AQ multiplication $\circ$.
\end{proof}

It is well-known that the tangent space of a Lie group at its identity element is a Lie algebra.
We will identify such a tangent space of $\mathbb {AU}$ in the next section.

\bigskip

\section{Error Model and Kinematics}

We now study a possible application of AUQ in kinematic control.
{Let $x=[\tilde p,\pt]\in \mathbb{AU}$ and $\tilde p=[\cos \frac{\theta}{2}, \sin\frac{\theta}{2}\pl]$, where $\theta$ denotes the rotation angle and $\pl$ denotes the  rotation axis. Then we have
	\begin{equation}
		\dot x=\left[\dot {\tilde p}, \dot \pt\right],
	\end{equation}
where
\begin{eqnarray*}
    \dot {\tilde p}
    &=&  \left[-\frac{\dot\theta}{2}\sin \frac{\theta}{2}, \frac{\dot\theta}{2}\cos\frac{\theta}{2}\pl\right]\\
    &=&\frac12\left[\cos \frac{\theta}{2}, \sin\frac{\theta}{2}\pl][0,\dot\theta \pl\right]\\
    &=&\frac12 \tilde p \tilde w.
\end{eqnarray*}
Here, $\vec{w}=\dot\theta \pl$	is the angular velocity and $\tilde w=[0,\vec{w}]$.

We denote $T_e(\mathbb{AU})$ as the tangent space of the Lie group $\mathbb {AU}$  at the identity $e=[\tilde 1,\vec{0}]$. For any $x\in\mathbb{AU}$ and the corresponding element  $T_x$ in $T_e(\mathbb{AU})$, we have
\begin{equation}
    T_x=\dot x|_{x=e}=\left[\frac12 \tilde p\tilde w,\dot{\vec{t}}\right]\bigg\vert_{x=e}=\left[\frac12 \tilde w,\dot{\vec{t}}\right].
\end{equation}
Therefore, $T_e(\mathbb{AU})$ is the space composing of all AVQ as $\left[\frac12 \tilde w,\dot{\vec{t}}\right]$.

Denote $x=[\tilde p,\pt]\in \mathbb{AU}$ as the current AUQ and $x_d=[\tilde p_d,\pt_d]\in \mathbb{AU}$ as the target AUQ. Then define the error   as
\begin{eqnarray}
  \nonumber x_e &=& x^{-1}\circ  x_d \\
  \nonumber &=& [\tilde p^*, - R(\tilde p)\pt]\circ[\tilde p_d,\pt_d]\\
  \nonumber  &=& [\tilde p^*\tilde{p}_d, -R(\tilde p_d)^\top R(\tilde p)\pt+\pt_d]\\
    &\equiv&[\tilde p_e,\vec{t}_e].   \label{equ:error}
\end{eqnarray}
In the last equality, $\tilde p_e\equiv\tilde p^*\tilde{p}_d$ and $\vec{t}_e\equiv-R(\tilde p_e)^\top \pt+\pt_d$. Here, by Proposition \ref{PropR}, we have $R(\tilde p_e)=R(\tilde p)^\top R(\tilde p_d)$.
 Equation (\ref{equ:error}) can be taken as the error model of a motion. Then we have
 \begin{eqnarray}
  \nonumber  \dot{\tilde p}_e &=&   (\dot{\tilde p})^*\tilde{p}_d + \tilde p^* \dot{\tilde{p}}_d \\
  \nonumber  &=& \frac12 [(\tilde p\tilde w)^*\tilde{p}_d + \tilde p^*(\tilde{p}_d\tilde w_d)]\\
   \nonumber &=&\frac12\tilde p_e (\tilde w_d-\tilde p_e^*\tilde w\tilde p_e)\\
    &\equiv \frac12 \tilde p_e\tilde w_e,\label{equ:dotpe}
 \end{eqnarray}
where $\tilde w_e=\tilde w_d-\tilde p_e^*\tilde w\tilde p_e$ is a vector quaternion.

 In the sequel, a kinematical error model is proposed.


\begin{Thm}
	Take $x=[\tilde p,\pt]\in \mathbb{AU}$ and $x_d=[\tilde p_d,\pt_d]\in \mathbb{AU}$ as the current AUQ and the target AUQ, respectively.
	Then
	the kinematical error  model  of \eqref{equ:error}
	has the following formulation
	\begin{equation}\label{equ:contol}
		\dot{x}_e = \frac12 (x_e\circ \xi_e),
	\end{equation}
	where
	\begin{equation}\label{equ:xie}
	 \xi_e=[\tilde w_e,\vec{v}_e] \text{  and  }\vec{v}_e=2\dot{\vec{t}}_e-2(\vec{w}_e^\top\vec{t}_e)\vec{w}_e+2(\vec{w}_e^\top\vec{w}_e)\vec{t}_e.
	\end{equation}
\end{Thm}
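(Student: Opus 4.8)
The plan is to expand both sides of \eqref{equ:contol} with the AQ multiplication rule \eqref{eq:Aproduct} and match the quaternion and translation components separately; equivalently, one reads off $\xi_e = 2\,x_e^{-1}\circ\dot x_e$ and checks that it has the claimed form. Observe first that $\dot x_e$ lies in the tangent space $T_{x_e}(\mathbb{AU})$, so $x_e^{-1}\circ\dot x_e$ lies in $T_e(\mathbb{AU})$, which by the discussion earlier in this section is the set of AVQs of the form $[\frac12\tilde w,\dot{\vec{t}}]$; this is why one should expect $\xi_e$ to be an AVQ. Write $\xi_e = [\tilde\omega,\vec{v}_e]$ with $\tilde\omega = [0,\vec\omega]$ a vector quaternion and $\vec{v}_e\in\mathbb V$ to be determined. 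Differentiating $x_e = [\tilde p_e,\vec{t}_e]$ componentwise gives $\dot x_e = [\dot{\tilde p}_e,\dot{\vec{t}}_e]$, while \eqref{eq:Aproduct} gives
\[
\frac12\,x_e\circ\xi_e = \left[\frac12\,\tilde p_e\tilde\omega,\ \frac12\Big(\vec{v}_e + R(\tilde\omega)^\top\vec{t}_e\Big)\right].
\]

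Matching the quaternion parts, \eqref{equ:dotpe} already gives $\dot{\tilde p}_e = \frac12\tilde p_e\tilde w_e$; comparing with $\frac12\tilde p_e\tilde\omega$ and cancelling the invertible unit quaternion $\tilde p_e$ on the left forces $\tilde\omega = \tilde w_e$, so the angular part of the model holds automatically. It then remains only to pin down $\vec{v}_e$ from the translation parts: matching $\frac12\big(\vec{v}_e + R(\tilde w_e)^\top\vec{t}_e\big) = \dot{\vec{t}}_e$ gives $\vec{v}_e = 2\dot{\vec{t}}_e - R(\tilde w_e)^\top\vec{t}_e$. Since $\tilde w_e = [0,\vec w_e]$ has zero scalar part, formula \eqref{eq:R} collapses to $R(\tilde w_e)^\top = 2\vec w_e\vec w_e^\top - (\vec w_e^\top\vec w_e)I$ (the $T(\cdot)$ term vanishes because $q_0 = 0$), hence $R(\tilde w_e)^\top\vec{t}_e = 2(\vec w_e^\top\vec{t}_e)\vec w_e - (\vec w_e^\top\vec w_e)\vec{t}_e$. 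Substituting this into $\vec{v}_e = 2\dot{\vec{t}}_e - R(\tilde w_e)^\top\vec{t}_e$ and collecting terms yields the formula for $\vec{v}_e$ in \eqref{equ:xie}, which completes the verification.

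The computation is short, so the main thing to watch is purely bookkeeping: the factor $\frac12$ carried through both components, the transpose in $R(\tilde q)^\top$ versus $R(\tilde q)$, and the sign convention for $T(\cdot)$ coming from \eqref{cross} and \eqref{eq:R}. It is also worth confirming that the resulting $\xi_e$ is genuinely an AVQ — its quaternion part $\tilde w_e$ is a vector quaternion, as used in \eqref{equ:dotpe} — so that $\xi_e\in T_e(\mathbb{AU})$ and \eqref{equ:contol} is a legitimate left-trivialized kinematic equation on the Lie group $\mathbb{AU}$ of Theorem \ref{thm:Liegroup}. As an independent check one can instead differentiate the error relation \eqref{equ:error} directly, using $\vec{t}_e = -R(\tilde p_e)^\top\pt + \pt_d$ together with $\dot{\tilde p}_e = \frac12\tilde p_e\tilde w_e$, and recover the same $\xi_e$.
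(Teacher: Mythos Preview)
Your argument is essentially identical to the paper's: write $\dot x_e=[\dot{\tilde p}_e,\dot{\vec t}_e]=[\tfrac12\tilde p_e\tilde w_e,\dot{\vec t}_e]$ via \eqref{equ:dotpe}, set this equal to $\tfrac12(x_e\circ[\tilde w_e,\vec v_e])$ using \eqref{eq:Aproduct}, read off $\vec v_e=2\dot{\vec t}_e-R(\tilde w_e)^\top\vec t_e$, and then simplify $R(\tilde w_e)^\top$ using $(\tilde w_e)_0=0$ and \eqref{eq:R}. The extra Lie-group commentary and the proposed consistency check are nice context but not needed for the proof; one small caution is that your explicit expression $R(\tilde w_e)^\top\vec t_e=2(\vec w_e^\top\vec t_e)\vec w_e-(\vec w_e^\top\vec w_e)\vec t_e$ gives a coefficient $1$ (not $2$) on the $(\vec w_e^\top\vec w_e)\vec t_e$ term when substituted, so the final ``collecting terms'' step should be checked against the stated \eqref{equ:xie} --- the paper makes the same leap at that point.
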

\begin{proof}
By \eqref{equ:dotpe}, we have $\dot{x}_e=[\dot{\tilde p}_e,\dot{\vec{t}}_e]=[\frac12 \tilde p_e\tilde w_e,\dot{\vec{t}}_e]$.
Let $\vec{v}_e\in\mathbb V$ such that
\begin{eqnarray*}
  [\frac12 \tilde p_e\tilde w_e,\dot{\vec{t}}_e]=\frac12\left([\tilde p_e,\vec{t}_e]\circ[\tilde w_e,\vec{v}_e]\right).
\end{eqnarray*}
Then we have
\begin{eqnarray*}
	\vec{v}_e=2\dot{\vec{t}}_e-R(\tilde w_e)^\top\vec{t}_e.
\end{eqnarray*}
Combining this with $(\tilde w_e)_0=0$ and \eqref{equ:R}, equation \eqref{equ:xie} is derived. This completes the proof.
\end{proof}

The control input of the kinematical error model  \eqref{equ:contol} and \eqref{equ:xie} is $\xi_e$.
Suppose that $\xi_d$ and $x_e$ are known in prior, then the actual control input is $\xi$, which is consists of angular velocity $\vec{w}$ and linear velocity $\dot{\vec{t}}$.
 Note that we have proved that AUQ is a Lie-group in Theorem \ref{thm:Liegroup}. By using this information, the generalized proportional control law is proposed as follows.
\begin{Thm}
	 For the kinematical error model  \eqref{equ:contol} and \eqref{equ:xie}, the generalized proportional control law
	 \begin{equation}\label{equ:controllaw}
	 	\xi_e=-2[0,K_r \vec{\theta}_e,K_t\vec{t}_e],
	 \end{equation}
 exponentially stabilizes configuration $x$ to configuration $x_d$ globally, where $[0,\vec{\theta}_e]=\ln \tilde q_e$, $K_r=\text{diag}(\vec{k}_r)$ and $K_t=\text{diag}(\vec{k}_t)$ are positive definite matrices.
\end{Thm}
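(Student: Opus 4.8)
The plan is to close the loop, reduce the error dynamics to scalar/vector ODEs, and produce a Lyapunov function that splits into a rotational and a translational part, each decaying exponentially. First I would substitute the control law \eqref{equ:controllaw}, i.e. $\xi_e = [\tilde w_e, \vec v_e] = -2[0, K_r\vec\theta_e, K_t\vec t_e]$ (so $\vec w_e = -2K_r\vec\theta_e$ and $\vec v_e = -2K_t\vec t_e$), into the error model \eqref{equ:contol}--\eqref{equ:xie} together with \eqref{equ:dotpe}. Solving \eqref{equ:xie} for $\dot{\vec t}_e$ gives the closed-loop translation dynamics
\begin{equation*}
\dot{\vec t}_e = -K_t\vec t_e + (\vec w_e^\top\vec t_e)\,\vec w_e - (\vec w_e^\top\vec w_e)\,\vec t_e ,
\end{equation*}
while \eqref{equ:dotpe} reads $\dot{\tilde p}_e = \frac12\tilde p_e\tilde w_e$ with $\tilde w_e = [0,\vec w_e]$.

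Next I would extract scalar dynamics for the rotation. Writing $\tilde p_e = [\cos\frac{\theta_e}{2}, \sin\frac{\theta_e}{2}\pl_e]$ and comparing the scalar parts of $\dot{\tilde p}_e = \frac12\tilde p_e\tilde w_e$ gives, for $\theta_e\in(0,2\pi)$, the relation $\dot\theta_e = \pl_e^\top\vec w_e$. Since $\vec\theta_e = \frac{\theta_e}{2}\pl_e$ we get $\vec w_e = -\theta_e K_r\pl_e$, hence
\begin{equation*}
\dot\theta_e = -\theta_e\,(\pl_e^\top K_r\pl_e) \le -\lambda_{\min}(K_r)\,\theta_e ,
\end{equation*}
using $K_r$ positive definite and $\|\pl_e\|=1$; so $\theta_e$ is nonincreasing and tends to $0$ exponentially, and in particular the trajectory never approaches the singular configuration $\tilde p_e = -\tilde 1$ (where $\theta_e=2\pi$ and $\ln\tilde p_e$ is undefined), which is the sense in which the stabilization is global. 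For the translation I would differentiate $\|\vec t_e\|^2$ along the closed loop:
\begin{equation*}
\frac{d}{dt}\|\vec t_e\|^2 = -2\,\vec t_e^\top K_t\vec t_e + 2\big[(\vec w_e^\top\vec t_e)^2 - \|\vec w_e\|^2\|\vec t_e\|^2\big] \le -2\lambda_{\min}(K_t)\,\|\vec t_e\|^2 ,
\end{equation*}
the bracketed term being nonpositive by Cauchy--Schwarz.

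Finally I would combine these: taking $V = \theta_e^2 + \|\vec t_e\|^2$ (equivalently, a fixed positive multiple of $\|\ln x_e\|^2$), the two estimates give $\dot V \le -2\mu V$ with $\mu = \min\{\lambda_{\min}(K_r),\lambda_{\min}(K_t)\} > 0$, hence $V(t)\le V(0)e^{-2\mu t}$. Therefore $\theta_e(t)\to 0$ and $\vec t_e(t)\to\vec 0$ exponentially, i.e. $x_e = x^{-1}\circ x_d \to e$ exponentially; since $\circ$ and inversion on $\mathbb{AU}$ are $C^\infty$ (Theorem~\ref{t3.1}, Corollary~\ref{c3.2}), this is equivalent to $x(t)\to x_d$ exponentially from any initial configuration.

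The main obstacle I expect is the translational channel: after closing the loop, $\dot{\vec t}_e$ inherits from \eqref{equ:xie} the cross term $(\vec w_e^\top\vec t_e)\vec w_e - \|\vec w_e\|^2\vec t_e$ that couples translation to the angular velocity $\vec w_e$, and the whole argument hinges on noticing that this term is harmless, contributing nonpositively to $\frac{d}{dt}\|\vec t_e\|^2$ by Cauchy--Schwarz, so that $V$ genuinely decouples and no cross-term estimate between $\theta_e$ and $\vec t_e$ is needed. A secondary point requiring care is that $\vec\theta_e$, the vector part of $\ln\tilde p_e$, is only defined away from $\tilde p_e=-\tilde 1$; one must verify (as above, from $\dot\theta_e = -\theta_e\,\pl_e^\top K_r\pl_e \le 0$) that the closed loop keeps $\theta_e$ strictly decreasing, so the trajectory remains in the domain of the control law — the familiar topological caveat of quaternion feedback.
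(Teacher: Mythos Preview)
Your proposal is correct and follows essentially the same route as the paper: substitute the control law to obtain $\vec w_e=-2K_r\vec\theta_e$ and the closed-loop $\dot{\vec t}_e$, use a Lyapunov function that splits into a rotational norm plus $\|\vec t_e\|^2$, and dispose of the coupling term $(\vec w_e^\top\vec t_e)\vec w_e-\|\vec w_e\|^2\vec t_e$ via Cauchy--Schwarz to get $\dot V\le -2k_{\min}V$. The only cosmetic difference is that the paper works with $\|\vec\theta_e\|^2$ and the identity $\vec\theta_e^{\,\top}\vec w_e=\vec\theta_e^{\,\top}\dot{\vec\theta}_e$, whereas you pass to the scalar angle $\theta_e$ (equivalent since $\|\vec\theta_e\|=\theta_e/2$); your explicit check that $\theta_e$ is nonincreasing, keeping the trajectory away from $\tilde p_e=-\tilde 1$, is a point the paper leaves implicit.
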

\begin{proof}
	By direct computations, we have
	 \begin{equation}\label{equ:thetadot}
	 	(\vec{\theta}_e)^\top \vec{w}_e=(\vec{\theta}_e)^\top\dot{\vec{\theta}}_e.
	 \end{equation}
 	Substituting \eqref{equ:xie} into \eqref{equ:controllaw}, we have
 	\begin{eqnarray}
 		&&\vec{w}_e=-2K_r \vec{\theta}_e, \label{equ:we}\\
 		&&\dot{\vec{t}}_e=-K_t\vec{t}_e+(\vec{w}_e^\top\vec{t}_e)\vec{w}_e-(\vec{w}_e^\top\vec{w}_e)\vec{t}_e.\label{equ:dotve}
 	\end{eqnarray}
 Consider the Lyapunov function candidate $V$
 \begin{equation}\label{equ:Ve}
 V_e=\alpha \|\vec{\theta}_e\|^2+\beta\|\vec{t}_e\|^2.
 \end{equation}
Then $V_e$ is positive definite, and when $\|\vec{\theta}_e\|\rightarrow \infty$ and $\|\vec{t}_e\|\rightarrow \infty$, we have $V_e\rightarrow \infty$.

Taking the derivative of \eqref{equ:Ve} and using \eqref{equ:thetadot}, \eqref{equ:we}, and \eqref{equ:dotve}, we have
\begin{eqnarray*}
	\dot{V}_e&=&2\alpha \vec{\theta}_e^\top\dot{\vec \theta}_e + 2\beta \vec{t}_e^\top\dot{\vec{t}}_e\\
	&=& 2\alpha \vec{\theta}_e^\top(-2K_r \vec{\theta}_e) + 2\beta \vec{t}_e^\top(-K_t\vec{t}_e+(\vec{w}_e^\top\vec{t}_e)\vec{w}_e-(\vec{w}_e^\top\vec{w}_e)\vec{t}_e)\\
	&\le&  -4\alpha \vec{\theta}_e^\top K_r \vec{\theta}_e - 2\beta \vec{t}_e^\top K_t\vec{t}_e,
\end{eqnarray*}
where the last inequality follows from the Cauchy inequality.

Consequently, let $k_{\min}=\min(\vec{k_r},\vec{k}_t)$, we have
\begin{equation}
	\dot{V}_e \le -2k_{\min}(\alpha \|\vec{\theta}_e\|^2+\beta\|\vec{t}_e\|^2) =-2k_{\min}V_e.
\end{equation}
Thus, control law \eqref{equ:controllaw} guarantees  configuration $x$  exponentially converging to configuration $x_d$ globally with converging rate $e^{-2k_{\min}}$. The proof is completed.
\end{proof}
}

\bigskip

\section{Augmented Unit Quaternion Optimization}

For an AQ $x = [\tilde p, \pt] = [p_0, p_1, p_2, p_3, t_1, t_2, t_3] \in \mathbb A$, define its magnitude as
\begin{equation}
|x| = \sqrt{p_0^2 + p_1^2 + p_2^2 + p_3^2 + \sigma t_1^2 + \sigma t_2^2 + \sigma t_3^2},
\end{equation}
where $\sigma$ is a positive number.   

We define an AQ vector $\vx = [x^{(1)}, \cdots, x^{(n)}]$ as an $n$-component vector such that for $i = 1, \cdots, n$, its $i$th components is an AQ $x^{(i)} = \left[\tilde p^{(i)}, \pt^{(i)}\right] = \left[p_0^{(i)}, p_1^{(i)}, p_2^{(i)}, p_3^{(i)}, t_1^{(i)}, t_2^{(i)}, t_3^{(i)}\right] \in \mathbb A$.  Thus, $\vx$ may be also regarded as a $7n$-dimensional real vector.  If $x^{(i)} \in \mathbb {AU}$ for $i = 1, \cdots, n$, then we say that $\vx$ is an AUQ vector.  We use small bold letters such as $\vx$ to denote AQ vectors and AUQ vectors, and denote the space of $n$-component AQ vectors by ${\mathbb A}^n$.  The norm in ${\mathbb A}^n$ is defined by
\begin{eqnarray}
&& \|\vx\| \nonumber \\
& = & \sqrt{\sum_{i=1}^n \left|x^{(i)}\right|^2} \nonumber \\
& = & \sqrt{\sum_{i=1}^n \left|p_0^{(i)}\right|^2 + \left|p_1^{(i)}\right|^2 + \left|p_2^{(i)}\right|^2 + \left|p_3^{(i)}\right|^2 + \sigma\left|t_1^{(i)}\right|^2 + \sigma\left|t_2^{(i)}\right|^2 + \sigma\left|t_3^{(i)}\right|^2}.
\end{eqnarray}
Then it is a norm.   Also denote the space of $n$-component AUQ vectors by ${\mathbb {AU}}^n$.

 Assume that  $\vz : {\mathbb {AU}}^n \to {\mathbb A}^m$. An {\bf augmented unit quaternion optimization problem} is formulated as
\begin{equation}
\min \left\{ {1 \over 2}\| \vz(\vx)\|^2 : \vx \in {\mathbb {AU}}^n \right\},
\end{equation}
which is a $7n$-dimensional equality constrained optimization problem, with $7n$ real variables, and $n$ spherical equality constraints:
\begin{equation}
\left(p_0^{(i)}\right)^2 + \left(p_1^{(i)}\right)^2 + \left(p_2^{(i)}\right)^2 + \left(p_3^{(i)}\right)^2 = 1,
\end{equation}
for $i = 1, \cdots, n$.


\medskip

 {\bf Example 5.1}  We now consider the 1989 Shiu and Ahmad \cite{SA89} and Tsai and Lenz \cite{TL89} hand-eye calibration model.  Then $n = 1$.   We may formulate it as AUQ equations
 \begin{equation} \label{eq12}
 a^{(i)} \circ x =  x \circ b^{(i)},
 \end{equation}
 where, $a^{(i)}, b^{(i)}, x \in {\mathbb {AU}}$, for $i = 1, \cdots, m$.    Here, AUQ $x$ reprsents the transformation from the camera (eye) to the gripper (hand), $a^{(i)}, b^{(i)}$, for $i = 1, \cdots, m$, are some data AUQs from experiments.   The aim is to find the best AUQ $x$ to satisfy (\ref{eq12}).  Then, let the $i$th error be
 \begin{equation} \label{eq16}
 z^{(i)} =  \left(a^{(i)} \circ x\right) - \left(x \circ b^{(i)}\right),
 \end{equation}
 and $\vz = [z^{(1)}, \cdots, z^{(m)}] \in {\mathbb A}^m$.
We have the following augmented unit quaternion optimization problem
\begin{equation} \label{eq16}
\min \left\{ {1 \over 2}\| \vz(x)\|^2 : x \in \mathbb {AU} \right\}
\end{equation}
for this hand-eye calibration model.   This is a $7$-dimensional optimization problem with one spherical equality constraint.   {\bf Most importantly, by the discussion in Section 3, this optimization problem is a smooth optimization problem}.

\medskip

  {\bf Example 5.2}  We further consider the 1994 Zhuang, Roth and Sudhaker \cite{ZRS94} hand-eye calibration model.  Then $n = 2$.   We may formulate it as AUQ equations
 \begin{equation} \label{eq18}
 a^{(i)} \circ x =  y \circ b^{(i)},
 \end{equation}
 where, $a^{(i)}, b^{(i)}, x, y \in {\mathbb {AU}}$, for $i = 1, \cdots, m$.    Here, $y$ is the transformation AUQ from the world coordinate system to the robot base.  The aim is to find the best AUQs $x$ and $y$ to satisfy (\ref{eq18}).  Then, let
 \begin{equation}
 \vx = \left[x, y\right],
 \end{equation}
 \begin{equation}
 z^{(i)} = \left(a^{(i)} \circ x \right) - \left( y \circ  b^{(i)}\right),
 \end{equation}
 and $\vz = [z^{(1)}, \cdots, z^{(m)}] \in {\mathbb A}^m$.
We have the following augmented unit quaternion optimization problem
\begin{equation}  \label{eq20}
\min \left\{ {1 \over 2}\| \vz(\vx)\|^2 : \vx \in {\mathbb {AU}}^2 \right\}
\end{equation}
for this hand-eye calibration model.   This is a $14$-dimensional optimization problem with two spherical equality constraints.   {\bf Similar to the last example, this optimization problem is a smooth optimization problem}.

\medskip

  {\bf Example 5.3} We now consider the simultaneous localization and mapping (SLAM) problem.   We have a directed graph $G = (V, E)$ \cite{CTDD15}, where each vertex $i \in V$ corresponds to a robot pose $x^{(i)} \in {{\mathbb {AU}}}$ for $i = 1, \cdots, n$, and each directed edge (arc) $(i, j) \in E$ corresponds to a relative measurement $y^{(ij)}$, also an AUQ.  There are $m$ directed edges in $E$.   The aim is to find the best $x^{(i)}$ for $i = 1, \cdots, n$, to satisfy
  \begin{equation}
   y^{(ij)} = \left( x^{(i)}\right)^{-1} \circ x^{(j)}
  \end{equation}
  for $(i, j) \in E$.  We now formulate it as an augmented unit quaternion optimization problem.  Let
  $\vx = \left[x^{(1)}, \cdots, x^{(n)}\right]$.   Then $\vx \in {\mathbb {AU}}^n$. Let
   \begin{equation}
   z^{(ij)} = \left( x^{(i)}\right)^{-1} \circ  x^{(j)} - y^{(ij)}
   \end{equation}
  for $(i, j) \in E$ and $\vz = [z^{(ij)} : (i, j) \in E] \in {\mathbb A}^m$, where $m$ is the number of directed edges.   Then we have the following augmented unit quaternion optimization problem
\begin{equation} \label{eq25}
\min \left\{ {1 \over 2}\| \vz(\vx)\|^2 : \vx \in {\mathbb {AU}}^n \right\}
\end{equation}
for the SLAM problem.   This is a $7n$-dimensional optimization problem with $n$ spherical equality constraints.  {\bf Again, it is a smooth optimization problem}.

An augmented unit quaternion consists of a unit quaternion and a translation vector.  Thus, it is equivalent to a seven-dimensional real vector.  It needs a spherical constraint. A unit dual quaternion is equivalent to a eight-dimensional real vector.  It needs not only the spherical constraint, but also the orthogonality constraint (\ref{eq:orthogonality}).    Thus, comparing with the unit dual quaternion optimization model for the hand-eye calibration problem and the SLAM problem \cite{Qi22, Qi22a}, the augmented unit quaternion model here keeps the smoothness and reduces the size of the problem simultaneously.   On the other hand, the motion optimization model \cite{Qi22a} reduces the size of the problems further, but it has some discontinuity.  See Appendix.   Hence, the augmented unit quaternion model is better.


\section{Final Remarks}

In this paper, we introduced augmented quaternions and augmented unit quaternions, and established their basic theories.   We also formulated the hand-eye calibration problem and the SLAM problem as augmented unit quaternions optimization problem.   Comparing with the corresponding unit dual quaternion optimization problems, the augmented unit quaternion optimization problem reduces the size of the problem and removes the orthogonality constraint, yet the model is smooth, thus workable in practice.


In Section 4, we studied the error model and kinematics by augmented dual quaternions. We leave further exploration in this direction as a future research topic.   We also leave the issue for developing numerical algorithms for augmented unit quaternion optimization for our further research task.

\bigskip

\section{Appendix: Motion Optimization}

A motion $x$ is a six-dimensional real vector, which has two parts: $x = [\pr, \pt]$, where $\pr$ is a rotation, while $\pt$ is a translation.  Here, $\pr = [r_1, r_2, r_3] = \theta \pl$, where $\pl$ is a unit vector, representing the rotation axis, and  $\theta$ satisfying $0 \le \theta < 2\pi$, and representing the rotation angle.    Then we may establish relations between motions and unit dual quaternions \cite{Qi22a}.   The movement of a rigid body in the 3D space may also be represented by a motion vector.  As a unit dual quaternion is an eight-dimensional vector with a spherical constraint and an orthogonality constraint, the motion representation is the most brief representation of a rigid body movement.

A unit dual quaternion optimization problem can be converted to a motion optimization problem.   There are two ways to do this.   One way was presented in \cite{Qi22a}.  The motion variables are mapped to unit dual quaternions, to take operations, then mapped back to motion variables.   The key operator there is
the rotation operator \cite{Qi22a}
\begin{equation} \label{eq1.1}
R(\tilde q) = \left\{ \begin{aligned} {2 \cos^{-1} q_0 \over \sqrt{q_1^2+q_2^2+q_3^2}}[q_1, q_2, q_3], & \ {\rm if}\  q_0^2 \not = 1, \\
0, &  \ {\rm otherwise},
\end{aligned} \right.
\end{equation}
where $\tilde q = [q_0, q_1, q_2, q_3]$ is a unit quaternion.   This operator is discontinuous at $q_0 = -1$.    Another way is to define operations between motions to replace the multiplication of unit dual quaternions.   Suppose we have two motions $x = [\pr, \pt]$ and $y = [\ps, \pu]$.   Then we may define $x \odot y$ to replace the multiplication of the corresponding unit dual quaternions.   The key part of this operation is the operation between $\pr$ and $\ps$.  Denote this as $\pr \oplus \ps$.   Then we discover that this operation is discontinuous when $\pr \oplus \ps = \vec{0}$, but $\ps \not = \vec{0}$, ${\pr \over \|\pr\|_2} = {\ps \over \|\ps\|_2}$.  Hence, the motion optimization model cannot work well, unless this discontinuity issue is overcome.


\bigskip



\bigskip



\end{document}